\documentclass[letterpaper]{article}

\usepackage{include/aaai18}
\usepackage{times}
\usepackage{helvet}
\usepackage{courier}
\usepackage{url}
\usepackage{graphicx}
\frenchspacing

\usepackage{subcaption} 
\usepackage{dblfloatfix}

\usepackage{booktabs}

\newcommand{\citet}[1]{\citeauthor{#1}~\shortcite{#1}}
\newcommand{\citep}{\cite}

\usepackage{comment}
\usepackage{amsmath}
\usepackage{amssymb}
\usepackage{amsthm}
\usepackage{siunitx}
\usepackage{wrapfig}
\usepackage{todonotes}
\usepackage{enumitem}
\usepackage{siunitx}
\usepackage{dsfont}
\usepackage{xcolor}
\usepackage{tablefootnote}
\usepackage[para,online,flushleft]{threeparttable}
\usepackage{multirow}
\usepackage{mathtools}


















\newcommand{\myvec}[1]{\mathbf{#1}}
\newcommand{\myvecsym}[1]{\boldsymbol{#1}}

\newcommand{\vpi}{\myvecsym{\pi}}

\newcommand{\vtau}{\myvecsym{\tau}}

\newcommand{\vu}{\myvec{u}}

\newcommand{\vz}{\myvec{z}}

\newcommand{\vT}{\myvec{T}}
\newcommand{\vU}{\myvec{U}}




\newcommand{\E}{\mathbb{E}}











\newcommand{\be}{\begin{equation}}
\newcommand{\ee}{\end{equation}}
\newcommand{\bea}{\begin{eqnarray}}
\newcommand{\eea}{\end{eqnarray}}
\newcommand{\beaa}{\begin{eqnarray*}}
	\newcommand{\eeaa}{\end{eqnarray*}}


\DeclareMathAlphabet{\mathpzc}{OT1}{pzc}{m}{n}








\newcommand{\exs}[2]{{\mathbb E_{#1}}\left[ #2 \right]}

\usepackage{algorithm}
\usepackage{algorithmicx}
\usepackage[noend]{algpseudocode}
\usepackage{changepage}
\usepackage{bbm}

\newtheorem{lemma}{Lemma}

\newcommand{\mysoftmax}[1]{{\ensuremath{\text{softmax}(#1)}}}


\pdfinfo{
	/Title (Counterfactual Multi-Agent Policy Gradients)
	/Author (Jakob N. Foerster, Gregory Farquhar, Triantafyllos Afouras, Nantas 
	Nardelli, Shimon Whiteson)
	/Keywords (reinforcement learning, multi-agent, multi-agent systems, 
	actor-critic, policy gradient, deep learning)
}
\setcounter{secnumdepth}{1}

\begin{document} 
	
	\title{Counterfactual Multi-Agent Policy Gradients}
	\author{
		Jakob N. Foerster$^{1,\dagger}$\\
		jakob.foerster@cs.ox.ac.uk\\
		\And
		Gregory Farquhar$^{1,\dagger}$ \\
		gregory.farquhar@cs.ox.ac.uk \\
		\AND
		Triantafyllos Afouras$^{1}$\\
		afourast@robots.ox.ac.uk\\
		\And
		Nantas Nardelli$^{1}$\\
		nantas@robots.ox.ac.uk\\
		\And
		Shimon Whiteson$^1$\\
		shimon.whiteson@cs.ox.ac.uk\\
		\AND
		\textnormal{$^1$University of Oxford, United Kingdom \quad 
		$^\dagger$Equal contribution}
	}
	
	\maketitle

\begin{abstract}
\label{sec:abstract}
Many real-world problems, such as network packet routing  and the coordination of autonomous vehicles, are naturally modelled as cooperative multi-agent systems.  There is a great need for new reinforcement learning methods that can efficiently learn
decentralised policies for such systems.  To this end, we propose a new multi-agent actor-critic method called \emph{counterfactual multi-agent} (COMA) policy
gradients.  COMA uses a centralised critic to estimate the $Q$-function and
decentralised actors to optimise the agents' policies.  In addition, to address
the challenges of multi-agent credit assignment, it uses a \emph{counterfactual
baseline} that marginalises out a single agent's action, while keeping the other
agents' actions fixed. COMA also uses a critic representation that allows the
counterfactual baseline to be computed efficiently in a single forward pass. We
evaluate COMA in the testbed of \emph{StarCraft unit micromanagement}, using a
decentralised variant with significant partial observability. COMA significantly
improves average performance over other multi-agent actor-critic methods in this
setting,
and the best performing agents are competitive with  state-of-the-art centralised
controllers that get access to the full state.\end{abstract}

\section{Introduction}
\label{sec:intro}

Many complex \emph{reinforcement learning} (RL) problems such as the coordination of autonomous vehicles \citep{cao2013overview}, network packet delivery \citep{ye2015multi}, and distributed logistics \citep{ying2005multi} are naturally modelled as cooperative multi-agent systems.  However, RL methods designed for single agents typically fare poorly on such tasks, since the joint action space of the agents grows exponentially with the number of agents.  

To cope with such complexity, it is often necessary to resort to \emph{decentralised policies}, in which each agent selects its own action conditioned only on its local action-observation history.  Furthermore, partial observability and communication constraints during execution may necessitate the use of decentralised policies even when the joint action space is not prohibitively large.

Hence, there is a great need for new RL methods that can efficiently learn decentralised policies.  In some settings, the learning itself may also need to be decentralised.  However, in many cases, learning can take place in a simulator or a laboratory in which extra state information is available and agents can communicate freely.  This \emph{centralised training of decentralised policies} is a standard paradigm for multi-agent planning \citep{Oliehoek08JAIR,kraemer2016multi} and has recently been picked up by the deep RL community \citep{foerster2016learning,jorge2016learning}.  However, the question of how best to exploit the opportunity for centralised learning remains open.

Another crucial challenge is \emph{multi-agent credit assignment} 
\citep{chang2003all}: in cooperative settings, joint actions typically generate 
only global rewards, making it difficult for each agent to deduce its own 
contribution to the team's success.  Sometimes it is possible to design 
individual reward functions for each agent.  However, these rewards are not 
generally available in cooperative settings and often fail to encourage 
individual agents to sacrifice for the greater good. This often substantially 
impedes multi-agent learning in challenging tasks, even with relatively small 
numbers of agents.

In this paper, we propose a new multi-agent RL method called 
\emph{counterfactual multi-agent} (COMA) policy gradients, in order to address 
these issues.  COMA takes an \emph{actor-critic} \citep{konda2000actor} 
approach, in which the \emph{actor}, i.e., the policy, is trained by following 
a gradient estimated by a \emph{critic}.
COMA is based on three main ideas.  

First, COMA uses a centralised critic. The critic is only used during learning, 
while only the actor is needed during execution. Since learning is centralised, 
we can therefore use a centralised critic that conditions on the joint action 
and all available state information, while each agent's policy conditions only 
on its own action-observation history.

Second, COMA uses a \emph{counterfactual baseline}.  The idea is inspired by 
\emph{difference rewards} \citep{wolpert2002optimal,tumer2007distributed}, in 
which each agent 
learns from a shaped reward that compares the global reward to the reward 
received when that agent's action is replaced with a \emph{default action}.  
While difference rewards are a powerful way to perform multi-agent credit 
assignment, they require access to a simulator or estimated reward 
function, and in general it is unclear how to choose the default 
action. COMA addresses this by using the centralised critic to compute an agent-specific
\emph{advantage function} that compares the estimated return for the 
current joint action  to a counterfactual baseline that marginalises out a 
single 
agent's action, while keeping the other agents' actions fixed. This is similar 
to calculating an \emph{aristocrat utility} \citep{wolpert2002optimal}, but 
avoids the problem of a recursive interdependence between the policy and utility function 
because the expected contribution of the counterfactual baseline to 
the policy gradient is zero.
Hence, instead of relying on extra simulations, approximations, or assumptions 
regarding appropriate default actions, COMA computes a separate 
baseline for each agent that relies on the centralised critic to reason about
counterfactuals in which only that agent's action changes.

Third, COMA uses a critic representation that allows the counterfactual baseline to be computed efficiently.  In a single forward pass, it computes the $Q$-values for all the different actions of a given agent, conditioned on the actions of all the other agents.   Because a single centralised critic is used for all agents, all $Q$-values for all agents can be computed in a single batched forward pass.

We evaluate COMA in the testbed of \emph{StarCraft unit 
micromanagement}\footnote{StarCraft and its expansion StarCraft: Brood War are 
trademarks of Blizzard Entertainment\texttrademark.}, which has recently 
emerged as a challenging RL benchmark task with high stochasticity, a large 
state-action space, and delayed rewards. Previous works  
\citep{usunier2016episodic,peng2017multiagent} have made use of a centralised 
control policy that conditions on the entire state and can use powerful 
macro-actions, using StarCraft's built-in planner, that combine movement and 
attack actions. To produce a meaningfully decentralised benchmark that proves 
challenging for scenarios with even relatively few agents, we propose a variant 
that massively reduces each agent's field-of-view and removes access to these 
macro-actions.

Our empirical results on this new benchmark show that COMA can significantly 
improve performance over other multi-agent actor-critic methods, as well as 
ablated versions of COMA itself. In addition, COMA's  best agents are 
competitive with state-of-the-art centralised controllers that are given 
access to full state information and macro-actions.

\section{Related Work}
\label{sec:related}

Although multi-agent RL has been applied in a variety of settings~\citep{busoniu2008comprehensive,yang2004multiagent}, it has often been restricted to tabular methods and simple environments.
One exception is recent work in deep multi-agent RL, which can scale to high 
dimensional input and action spaces. \citet{tampuu2015multiagent} use a 
combination of DQN with independent 
$Q$-learning~\citep{tan1993multi,MASfoundations09} to learn how to play 
two-player pong. More recently the same method has been used 
by~\citet{leibo2017multi} to study the emergence of collaboration and defection 
in sequential social dilemmas.

Also related is work on the emergence of communication between agents, learned 
by gradient descent 
\citep{das2017learning,mordatch2017emergence,lazaridou2016multi,foerster2016learning,sukhbaatar2016learning}.
 In this line of work, passing gradients between agents during training and 
sharing parameters are two common ways to take advantage of centralised 
training. However, these methods do not allow for extra state information to be 
used during learning and do not address the multi-agent credit assignment 
problem.

\citet{guptacooperative} investigate actor-critic methods for decentralised execution with centralised training. However, in their methods both the actors and the critic condition on local, per-agent, observations and actions, and multi-agent credit assignment is addressed only with hand-crafted local rewards.

Most previous applications of RL to StarCraft micromanagement use a centralised 
controller, with access to the full state, and control of all units, although 
the 
architecture of the controllers exploits the multi-agent nature of the problem.
\citet{usunier2016episodic} use a \emph{greedy MDP}, which at each timestep
sequentially chooses actions for agents given all previous actions, in
combination with zero-order optimisation, while \citet{peng2017multiagent} use
an actor-critic method that relies on RNNs to exchange information between the
agents.

The closest to our problem setting is that of
\citet{foerster2017stabilising}, who also use a multi-agent representation and
decentralised policies. However, they focus on stabilising experience replay
while using DQN and do not make full use of the centralised training regime. As
they do not report on absolute win-rates we do not compare performance directly.
However, \citet{usunier2016episodic} address similar scenarios to our
experiments and implement a DQN baseline in a fully observable setting. In Section \ref{sec:results} we
therefore report our competitive performance against these state-of-the-art
baselines, while maintaining decentralised
control. \citet{omidshafiei2017deep} also address the stability of experience
replay in multi-agent settings, but assume a fully decentralised training
regime.

\cite{lowe2017multi} concurrently propose a multi-agent policy-gradient 
algorithm using centralised critics. Their approach does not address  
multi-agent credit assignment. Unlike our work, it learns a separate 
centralised critic for each agent and is applied to competitive 
environments with continuous action spaces. 

Our work builds directly off of the idea of \emph{difference rewards} 
\citep{wolpert2002optimal}.  The relationship of COMA to 
this line of work is discussed in Section \ref{sec:methods}.

\section{Background}
\label{sec:background}

We consider a fully cooperative multi-agent task that can be described as a 
stochastic game $G$, defined by a tuple $G = {\langle}S, U, P, r, Z, O, n, 
\gamma{\rangle}$, in which $n$ agents identified by $a \in A \equiv 
\{1,...,n\}$ choose sequential actions. The environment has a true state $s \in 
S$. At each time step, each agent simultaneously chooses an action $u^a \in U$, 
forming a joint action $\mathbf{u} \in \mathbf{U} \equiv U^{n}$ which induces a 
transition in the environment according to the state transition function 
$P(s'|s,\mathbf{u}): S \times \mathbf{U} \times S \rightarrow [0,1]$. The 
agents all share the same reward function $r(s,\mathbf{u}): S \times \mathbf{U} 
\rightarrow \mathbb{R}$ and $\gamma \in [0,1)$ is a discount factor.

We consider a partially observable setting, in which agents draw observations $z \in Z$ according to the observation function $O(s, a): S \times A \rightarrow Z$. Each agent has an action-observation history $\tau^a \in T \equiv (Z \times U)^{*}$, on which it conditions a stochastic policy $\pi^a(u^a|\tau^a): T \times U \rightarrow [0,1]$. We denote joint quantities over agents in bold, and joint quantities over agents other than a given agent $a$ with the superscript ${-a}$.

The discounted return is $R_t = \sum_{l=0}^\infty \gamma^l r_{t+l}$. The agents' joint policy induces a value function, i.e., an expectation over $R_t$, $V^{\vpi}(s_t) = \exs{s_{t+1:\infty},\vu_{t:\infty}}{R_t|s_t}$, and an action-value function $Q^{\vpi}(s_t, \vu_t) = \exs{s_{t+1:\infty},\vu_{t+1:\infty}}{R_t|s_t,\vu_t}$. The advantage function is given by $A^{\vpi}(s_t, \vu_t) = Q^{\vpi}(s_t, \vu_t) - V^{\vpi}(s_t)$.

Following previous work 
\citep{Oliehoek08JAIR,kraemer2016multi,foerster2016learning,jorge2016learning}, 
our problem setting allows centralised training but requires 
decentralised execution. This is a natural paradigm for a large set of 
multi-agent problems where training is carried out using a simulator with 
additional state information, but the agents must rely on local 
action-observation histories during execution. To condition on this full 
history, a deep RL agent may make use of a recurrent neural network 
\citep{hausknecht2015deep}, typically with a gated model such as LSTM 
\citep{hochreiter1997long} or GRU \citep{cho2014properties}.

In Section \ref{sec:methods}, we develop a new multi-agent policy gradient 
method for tackling this setting.  In the remainder of this section, we provide 
some background on single-agent policy gradient methods 
\citep{sutton1999policy}. Such methods optimise a single agent's policy, 
parameterised by $\theta^\pi$, by performing gradient ascent on an estimate of 
the expected discounted total reward $J = \exs{\pi}{R_0}$. Perhaps the simplest 
form of policy gradient is REINFORCE \citep{williams1992simple}, in which the 
gradient is:
\begin{equation}
g = \exs{s_{0:\infty},u_{0:\infty}}{\sum_{t=0}^{T} R_t \nabla_{\theta^{\pi}} \log \pi (u_t \vert s_t) }.
\end{equation}
In \emph{actor-critic} approaches 
\citep{sutton1999policy,konda2000actor,DBLP:journals/corr/SchulmanMLJA15},
 the \emph{actor}, i.e., the policy, is trained by following a gradient that 
depends on a \emph{critic}, which usually estimates a value function. In 
particular, $R_t$ is replaced by any expression equivalent to $Q(s_t, u_t) - 
b(s_t)$, where $b(s_t)$  is a baseline designed to reduce variance 
\citep{weaver2001optimal}.  A common choice is $b(s_t) = V(s_t)$, in which case 
$R_t$ is replaced by $A(s_t, u_t)$.  Another option is to replace $R_t$ with 
the \emph{temporal difference} (TD) error $r_t + \gamma V(s_{t+1}) - V(s)$, 
which is an unbiased estimate of $A(s_t, u_t)$. In practice, the gradient must 
be estimated from trajectories sampled from the environment, and the 
(action-)value functions must be estimated with function approximators. 
Consequently, the bias and variance of the gradient estimate depends strongly 
on the exact choice of estimator \citep{konda2000actor}. 

In this paper, we  train critics $f^c(\cdot,\theta^c)$ on-policy to estimate 
either $Q$ or $V$, using a variant of TD($\lambda$) \citep{sutton1988learning} 
adapted for use with deep neural networks. TD($\lambda$) uses a mixture of 
$n$-step returns $G_t^{(n)} = \sum_{l=1}^{n} \gamma^{l-1}r_{t+l} + \gamma^n 
f^c(\cdot_{t+n}, \theta^c)$. In particular, the critic parameters $\theta^c$ 
are updated by minibatch gradient descent to minimise the following loss:
\begin{equation}
\mathcal{L}_t(\theta^c) = ( y^{(\lambda)} - f^c(\cdot_t, \theta^c))^{2},
\end{equation}
where $y^{(\lambda)} = (1-\lambda)\sum_{n=1}^{\infty} \lambda^{n-1} G_t^{(n)}$, 
and the $n$-step returns $G_t^{(n)}$ are calculated with bootstrapped values 
estimated by a \emph{target network} \citep{mnih2015human} with parameters 
copied periodically from $\theta^c$.

\section{Methods}
\label{sec:methods}

In this section, we describe approaches for extending policy gradients to our multi-agent setting.

\subsection{Independent Actor-Critic}
The simplest way to apply policy gradients to multiple agents is to have each agent learn independently, with its own actor and critic, from its own action-observation history.  This is essentially the idea behind \emph{independent Q-learning} \citep{tan1993multi}, which is perhaps the most popular multi-agent learning algorithm, but with actor-critic in place of $Q$-learning.  Hence, we call this approach \emph{independent actor-critic} (IAC).  

In our implementation of IAC, we speed learning by sharing parameters among the 
agents, i.e., we learn only one actor and one critic, which are used by all 
agents. The agents can still behave differently because they receive different 
observations, including an agent-specific ID, and thus evolve different hidden 
states.  Learning remains independent in the sense that each agent's critic 
estimates only a local value function, i.e., one that conditions on $u^a$, not 
$\mathbf{u}$. Though we are not aware of previous applications of this specific 
algorithm, we do not consider it a significant contribution but instead merely 
a baseline algorithm.

We consider two variants of IAC. In the first, each agent's critic estimates $V(\tau^a)$ and follows a gradient based on the TD error, as described in Section \ref{sec:background}.  In the second, each agent's critic estimates $Q(\tau^a,u^a)$ and follows a gradient based on the advantage: $A(\tau^a, u^a) = Q(\tau^a,u^a)  - V(\tau^a)$, where $V(\tau^a) = \sum_{u^a}  \pi(u^a | \tau^a) Q(\tau^a,u^a)$. Independent learning is straightforward, but the lack of information sharing at training time makes it difficult to learn coordinated strategies that depend on interactions between multiple agents, or for an individual agent to estimate the contribution of its actions to the team's reward. 

\begin{figure*}[ht]
\centering
	\includegraphics[width=0.9\linewidth]{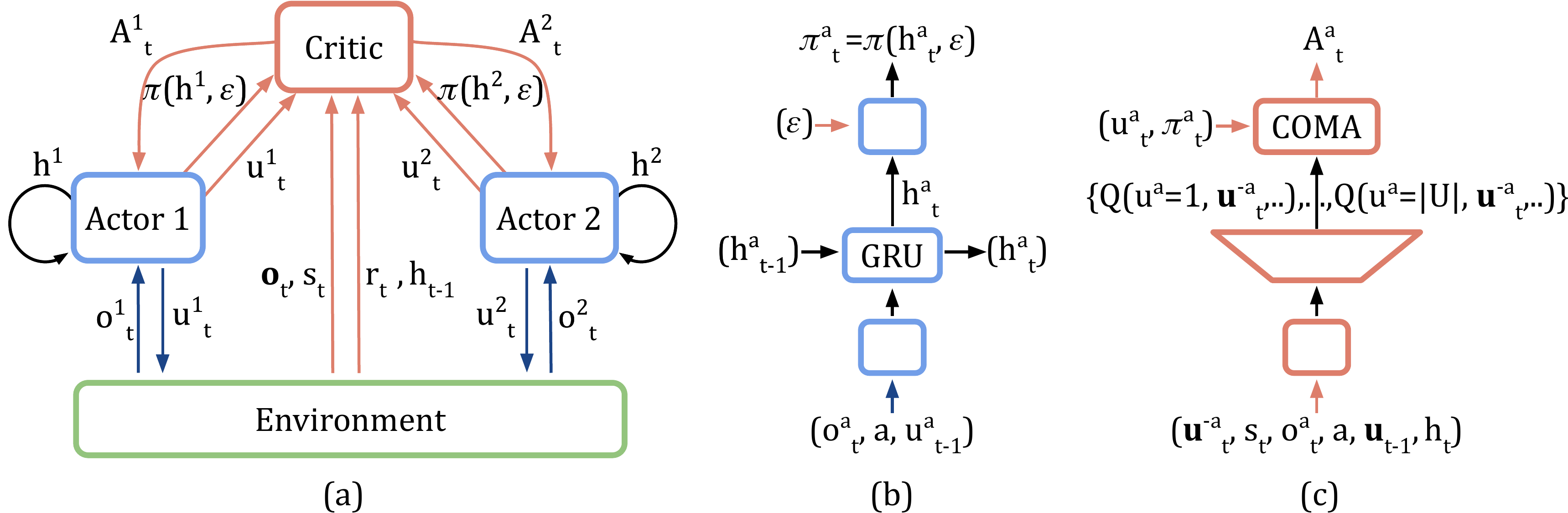}
\caption{In (a), information flow between the decentralised actors, the environment and the centralised critic in COMA; red arrows and components are only required during centralised learning. In (b) and (c), architectures of the actor and critic. 
}
\label{fig:fig_1}
\end{figure*}

\subsection{Counterfactual Multi-Agent Policy Gradients}
The difficulties discussed above arise because, beyond parameter sharing, IAC fails to exploit the fact that learning is centralised in our setting.  In this section, we propose \emph{counterfactual multi-agent} (COMA) policy gradients, which overcome this limitation.  Three main ideas underly COMA: 1) centralisation of the critic, 2) use of a counterfactual baseline, and 3) use of a critic representation that allows efficient evaluation of the baseline.  The remainder of this section describes these ideas.

First, COMA uses a centralised critic.  Note that in IAC, each actor $\pi(u^a | \tau^a)$  and each critic $ Q(\tau^a,u^a) $ or $ V(\tau^a)$ conditions only on the agent's own action-observation history $\tau^a$.  However, the critic is used only during learning and only the actor is needed during execution.  Since learning is centralised, we can therefore use a centralised critic that conditions on the true global state $s$, if it is available, as well as the joint action-observation histories $\vtau$. Each actor conditions on its own action-observation histories $\tau^a$, with parameter sharing, as in IAC.  Figure \ref{fig:fig_1}a illustrates this setup.

A naive way to use this centralised critic would be for each actor to follow a gradient based on the TD error estimated from this critic:

\begin{equation}
g = \nabla_{\theta^{\pi}}  \log  \pi({u^a_t|\tau^a_t} )\left( r + \gamma V(s_{t+1}) - V(s_t) \right).
\end{equation}

 However, such an approach fails to address a key credit assignment problem.  
 Because the TD error considers only global rewards, the gradient computed for 
 each actor does not explicitly reason about how that particular agent's 
 actions contribute to that global reward.  Since the other agents may be 
 exploring, the gradient for that agent becomes very noisy, particularly when 
 there are many agents.

Therefore, COMA uses a \emph{counterfactual baseline}.  The idea is inspired by 
\emph{difference rewards} \citep{wolpert2002optimal}, in which each agent 
learns from a shaped reward $D^a = r(s,\vu) - r(s, (\vu^{-a},c^a))$ that 
compares the global reward to the reward received when the action of agent $a$ 
is replaced with a \emph{default action} $c^a$. Any action by agent $a$ that 
improves $D^a$ also improves the true global reward $r(s, \vu)$, because $r(s, 
(\vu^{-a},c^a))$ does not depend on agent $a$'s actions.

Difference rewards are a powerful way to perform multi-agent credit 
assignment.  However, they typically require access to a simulator in order to 
estimate $r(s, (\vu^{-a},c^a))$.  When a simulator is already being used for 
learning, difference rewards increase the number of simulations that must be 
conducted, since each agent's difference reward requires a separate 
counterfactual simulation. \citet{proper2012modeling} and 
\citet{colby2015approximating} 
propose estimating difference rewards using function approximation rather than a 
simulator. However, this still requires a 
user-specified default action $c^a$ that can be difficult to choose in many 
applications. In an actor-critic architecture, this approach would also
introduce an additional source of approximation error.

A key insight underlying COMA is that a centralised critic can be used to 
implement difference rewards in a way that avoids these problems.  COMA learns 
a centralised critic, $Q(s, \vtau, \vu)$ that estimates $Q$-values for the joint 
action $\vu$ conditioned on the central state $s$ and the joint action-observation history. For each agent $a$ we can 
then compute an advantage function that compares the $Q$-value for the current 
action $u^a$ to a counterfactual baseline that marginalises out $u^a$, while 
keeping the other agents' actions $\vu^{-a}$ fixed:
\begin{equation}
	\label{eqn:advantage}
	A^a(s, \vtau, \vu) = Q(s, \vtau, \vu ) - \sum_{u'^a} \pi^a(u'^a \vert \tau^a) 
	Q(s, \vtau, (\vu^{-a},u'^a)).
\end{equation}
Hence, $A^a(s, \vtau, u^a)$ computes a separate baseline for each agent that uses the 
centralised critic to reason about counterfactuals in which only $a$'s action 
changes, learned directly from agents' experiences instead of relying on extra 
simulations, a reward model, or a user-designed default action.

This advantage has the same form as the \emph{aristocrat utility} 
\citep{wolpert2002optimal}. However, optimising for an aristocrat utility using 
value-based methods creates a self-consistency problem because the policy and utility 
function depend recursively on each other. As a result, prior work focused
on difference evaluations using default states and actions. COMA is different because the
 counterfactual baseline's expected contribution to the gradient, as with other policy gradient baselines, is zero.  Thus, while
the baseline does depend on the policy, its expectation does not. Consequently, COMA can use 
this form of the advantage without creating a self-consistency problem.

While COMA's advantage function replaces potential extra simulations with 
evaluations of the critic, those evaluations may themselves be expensive if the 
critic is a deep neural network.  Furthermore, in a typical representation, the 
number of output nodes of such a network would equal $\vert U \vert ^ n$, the 
size of the joint action space, making it impractical to train.  To address 
both these issues, COMA uses a critic representation that allows for efficient 
evaluation of the baseline. In particular, the actions of the other agents, 
$\vu^{-a}_t$, are part of the input to the network, which outputs a $Q$-value 
for each of agent $a$'s actions, as shown in Figure~\ref{fig:fig_1}c. 
Consequently, the counterfactual advantage can be calculated efficiently by a 
single forward pass of the actor and critic, for each agent. Furthermore, the 
number of outputs is only $\vert U \vert$ instead of ($\vert U \vert ^ n$). 
While the network has a large input space that scales linearly in the number of 
agents and actions, deep neural networks can generalise well across such spaces.

In this paper, we focus on settings with discrete actions. However, COMA can be 
easily extended to continuous actions spaces by estimating the expectation in  
\eqref{eqn:advantage} with Monte Carlo samples or using functional forms that 
render it analytical, e.g., Gaussian policies and critic.

The following lemma establishes the convergence of COMA to a locally optimal policy.  The proof follows directly from the convergence of single-agent actor-critic algorithms \citep{sutton1999policy}, and is subject to the same assumptions. \citeauthor{lyu2024centralizedcriticsmultiagentreinforcement} (\citeyear{lyu2024centralizedcriticsmultiagentreinforcement}, Appendix E.2) prove a related result, showing that a family of cooperative policy gradient methods with centralized critics, which includes COMA, converge to local optima assuming access to the correct policy values, i.e., a perfect critic.

\begin{lemma}
\label{lemma}
For an actor-critic algorithm with a compatible TD(1) critic following a COMA 
policy gradient
\begin{equation}
g_k = \E_{\vpi} \left[ \sum_a \nabla_{\theta_k} \log
\pi^a(u^a | \tau^a)
A^a(s,\vtau, \vu) \right]
\end{equation}
at each iteration $k$,
\begin{equation}
\lim \inf_k ||\nabla J|| = 0 \quad w.p. \; 1.
\end{equation}
\end{lemma}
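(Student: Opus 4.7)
The plan is to reduce the multi-agent COMA update to a single-agent actor-critic update on the \emph{joint} policy, after which the lemma follows from the standard convergence results cited. Since execution is decentralised, the joint policy factorises as $\vpi(\vu|\vtau) = \prod_a \pi^a(u^a|\tau^a)$, so
\begin{equation}
\nabla_{\theta_k} \log \vpi(\vu|\vtau) = \sum_a \nabla_{\theta_k} \log \pi^a(u^a|\tau^a).
\end{equation}
The single-agent policy gradient theorem applied to $\vpi$ therefore gives $\nabla J = \E_{\vpi}\!\left[\sum_a \nabla_{\theta_k}\log \pi^a(u^a|\tau^a)\,Q^{\vpi}(s,\vu)\right]$. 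My first step is to verify that replacing $Q^{\vpi}(s,\vu)$ with the COMA advantage $A^a(s,\vu)$ of \eqref{eqn:advantage} leaves this expectation unchanged.

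For that I would fix $a$, condition on $(s,\vtau,\vu^{-a})$, and note that the COMA baseline $\sum_{u'^a}\pi^a(u'^a|\tau^a)Q(s,(\vu^{-a},u'^a))$ does not depend on $u^a$. The inner expectation over $u^a\sim\pi^a(\cdot|\tau^a)$ then reduces to $\sum_{u^a}\pi^a(u^a|\tau^a)\nabla_{\theta_k}\log\pi^a(u^a|\tau^a) = \nabla_{\theta_k}\sum_{u^a}\pi^a(u^a|\tau^a) = 0$, so the baseline contributes zero to $g_k$ in expectation. Hence $g_k$ agrees in expectation with the joint policy gradient whenever the critic is consistent for $Q^{\vpi}$.

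The second step is to bring in the \emph{compatible} TD(1) critic. Because $\nabla_{\theta_k}\log\vpi = \sum_a \nabla_{\theta_k}\log\pi^a$, any critic whose features span the per-agent score functions is automatically compatible with $\vpi$ in the sense of \citet{sutton1999policy}; on-policy TD(1) training projects $Q^{\vpi}$ onto this feature space, and by the compatibility condition the projection residual is orthogonal to the policy gradient. With an unbiased gradient estimator in hand, the two-timescale stochastic approximation argument of \citet{konda2000actor} transfers verbatim, yielding $\liminf_k \|\nabla J\| = 0$ almost surely under the usual regularity assumptions (bounded rewards, Robbins--Monro step sizes on both timescales, ergodicity of the induced Markov chain, and boundedness of the iterates).

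The main obstacle I expect is the compatibility check itself: the COMA critic ingests $(s,\vu^{-a})$ and outputs a $Q$-value for every action of agent $a$, so it is not, on the face of it, a linear combination of the score features $\nabla_{\theta_k}\log\vpi$. I would either restrict the lemma to a critic parameterisation whose feature span contains these scores (the standard ``compatible function approximation'' assumption), or invoke the Konda--Tsitsiklis asymptotic-bias bound, which still gives $\liminf_k\|\nabla J\|=0$ provided the critic is driven to its TD fixed point on each slow actor timescale. Everything else is bookkeeping, since COMA deviates from vanilla joint actor-critic only through the per-agent counterfactual baseline, whose vanishing expected contribution we have already established.
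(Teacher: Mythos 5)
Your proposal is correct and follows essentially the same route as the paper: show the counterfactual baseline contributes zero to the expected gradient via $\sum_{u^a}\nabla_{\theta}\pi^a(u^a|\tau^a)=\nabla_{\theta}1=0$, rewrite the remaining term as the single-agent actor-critic gradient for the factorised joint policy $\vpi(\vu|\vtau)=\prod_a\pi^a(u^a|\tau^a)$, and invoke the Konda--Tsitsiklis convergence result under its compatibility and timescale assumptions. Your added scrutiny of the compatibility condition is more careful than the paper, which simply assumes it, but the argument is the same.
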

\begin{proof}
See Appendix \ref{app:proof}.
\end{proof}

\section{Experimental Setup}
\label{sec:setting}

In this section, we describe the StarCraft problem to which we apply COMA, as
well as details of the state features, network architectures, training regimes,
and ablations.

\textbf{Decentralised StarCraft Micromanagement.} StarCraft is a rich
environment with stochastic dynamics that cannot be easily emulated. Many
simpler multi-agent settings, such as Predator-Prey \citep{tan1993multi} or
Packet World \citep{weyns2005packet}, by contrast, have full simulators with
controlled randomness that can be freely set to any state in order to perfectly
replay experiences. This makes it possible, though computationally expensive, to
compute difference rewards via extra simulations. In StarCraft, as in the real
world, this is not possible.

In this paper, we focus on the problem of \emph{micromanagement} in StarCraft,
which refers to the low-level control of individual units' positioning and
attack commands as they fight enemies. This task is naturally represented as a
multi-agent system, where each StarCraft unit is replaced by a decentralised
controller. We consider several scenarios with symmetric teams formed of: 3
marines (3m), 5 marines (5m), 5 wraiths (5w), or 2 dragoons with 3 zealots 
(2d\_3z). The enemy team is controlled by the StarCraft AI, which uses 
reasonable but suboptimal hand-crafted heuristics.

We allow the agents to choose from a set of discrete actions:
\texttt{move[direction]}, \texttt{attack[enemy\_id]}, \texttt{stop}, and
\texttt{noop}. In the StarCraft game, when a unit selects an attack action, it
first moves into attack range before firing, using the game's built-in
pathfinding to choose a route. These powerful \emph{attack-move} macro-actions
make the control problem considerably easier.

To create a more challenging benchmark that is meaningfully decentralised, we
impose a restricted field of view on the agents, equal to the firing range of
ranged units' weapons, shown in Figure~\ref{fig:setup}. This departure from the standard setup for
centralised StarCraft control has three effects.

\begin{figure}[h!]
    \begin{center}
        \includegraphics[width=1\linewidth]{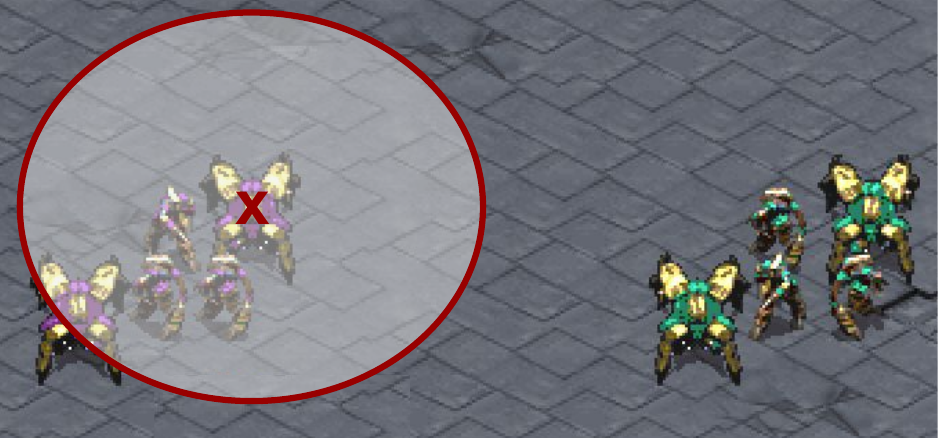}
    \end{center}
    \caption{Starting position with example local field of view for the 2d\_3z map.}
    \label{fig:setup}
\end{figure}

First, it introduces significant partial observability. Second, it means units
can only attack when they are in range of enemies, removing access to the
StarCraft macro-actions. Third, agents cannot distinguish between enemies who
are dead and those who are out of range and so can issue invalid attack commands
at such enemies, which results in no action being taken. This substantially
increases the average size of the  action space, which in turn increases the
difficulty of both exploration and control.

Under these difficult conditions, scenarios with even relatively small numbers
of units become much harder to solve. As seen in Table \ref{tbl:test_in_domain},
we compare against a simple hand-coded heuristic that instructs the agents to
run forwards into range and then focus their fire, attacking each enemy in turn
until it dies. This heuristic achieves a 98\% win rate on 5m with a full field
of view, but only 66\% in our setting. To perform well in this task, the agents
must learn to cooperate by positioning properly and focussing their fire,
while remembering which enemy and ally units are alive or out of
view.

All agents receive the same global reward at each time step, equal to the sum of
damage inflicted on the opponent units minus half the damage taken. Killing an
opponent generates a reward of 10 points, and winning the game generates a
reward equal to the team's remaining total health plus 200. This damage-based
reward signal is comparable to that used by \citet{usunier2016episodic}. Unlike 
\cite{peng2017multiagent}, our approach does not require estimating local 
rewards.

 \begin{figure*}[t!]
 	\centering
 	\begin{subfigure}[b]{0.4\linewidth}
 		\includegraphics[width=\textwidth]{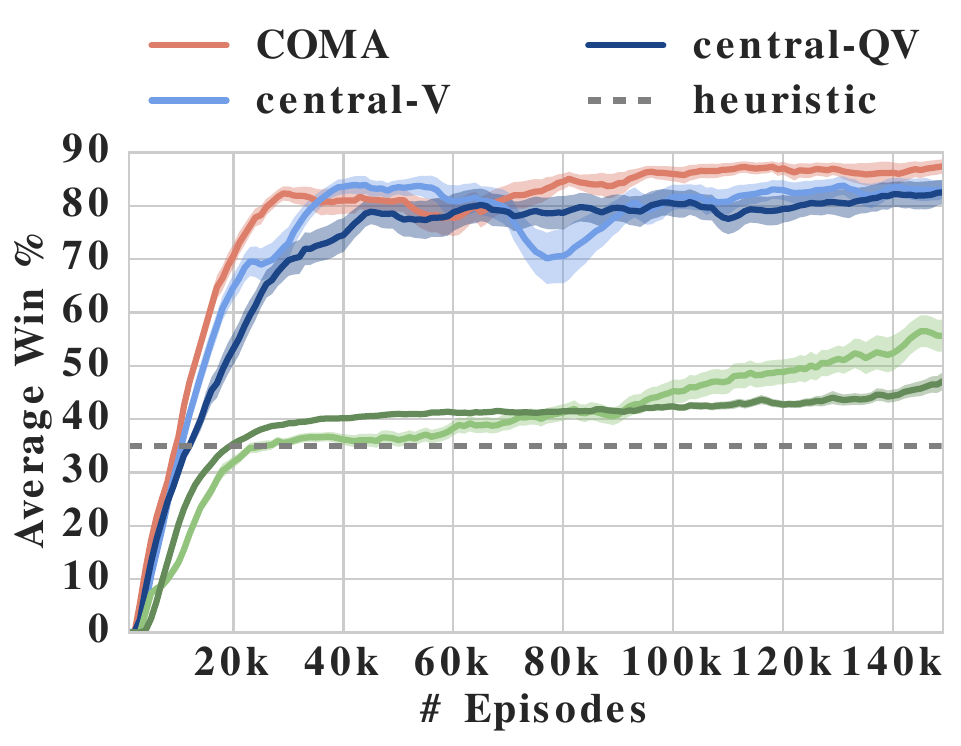}
 		\caption{3m}
 	\end{subfigure}
  	\begin{subfigure}[b]{0.4\linewidth}
	\includegraphics[width=\textwidth]{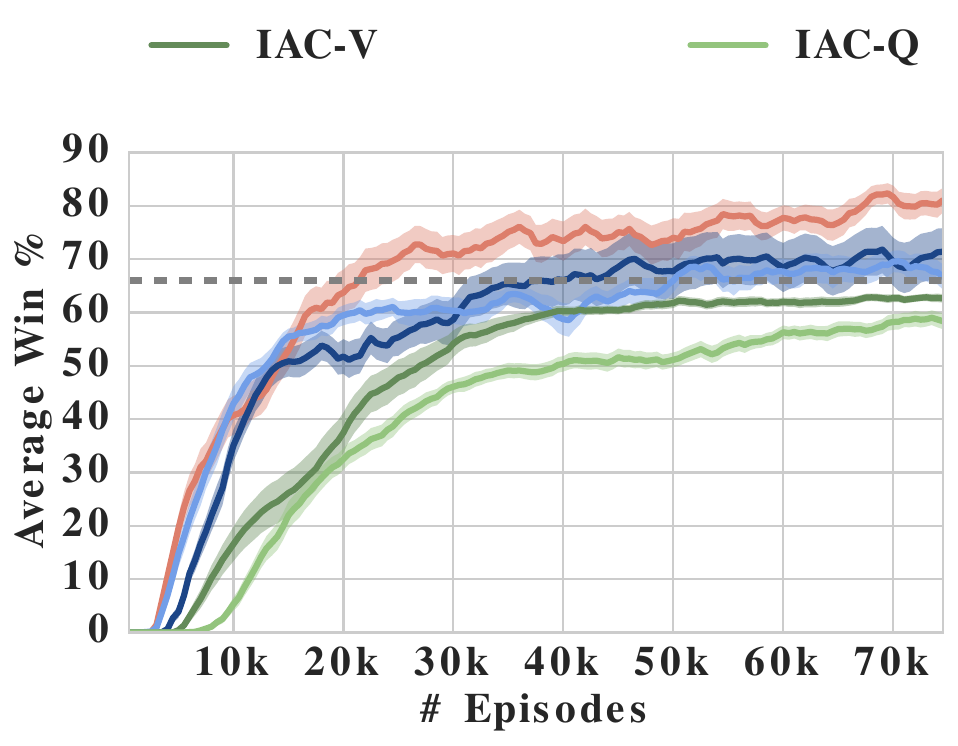}
 	\caption{5m}
 	\end{subfigure}
 	\begin{subfigure}[b]{0.4\linewidth}
	\includegraphics[width=\textwidth]{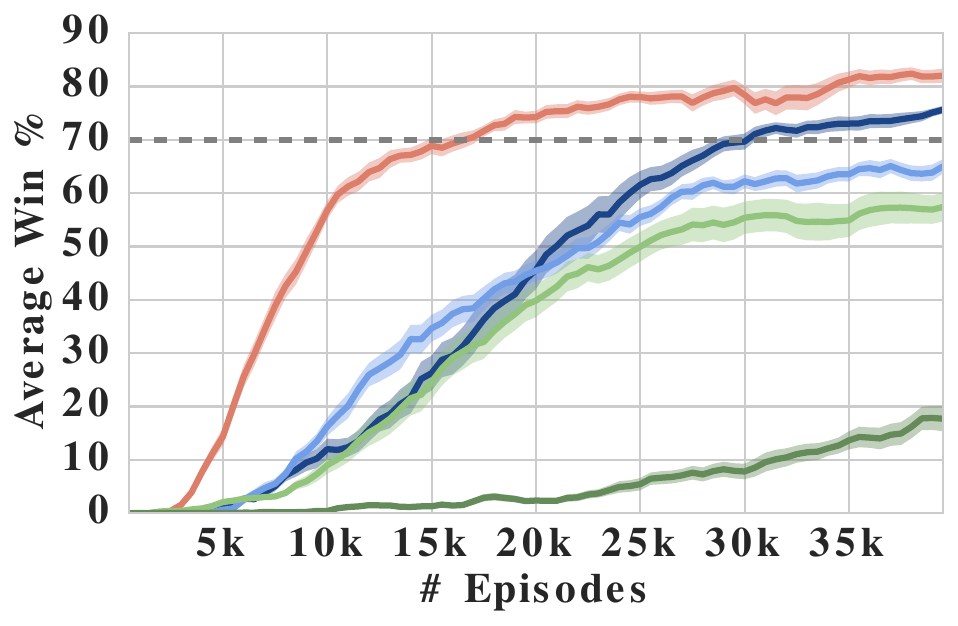}
	\caption{5w}
	\end{subfigure}
 	\begin{subfigure}[b]{0.4\linewidth}
	\includegraphics[width=\textwidth]{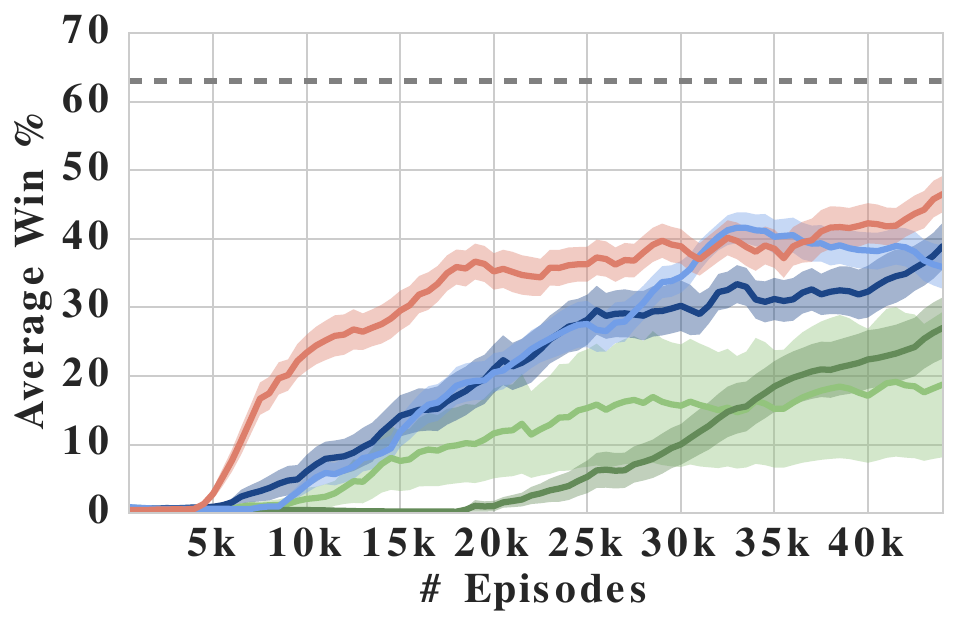}
	\caption{2d\_3z}

	\end{subfigure}
 	\caption{Win rates for COMA and competing algorithms on four different 
 		scenarios. COMA outperforms all baseline methods. Centralised critics 
 		also
 		clearly outperform their decentralised counterparts. The legend at the 
 		top applies across all plots.}
 	\label{fig:fig_2}
    \vspace{-0.75em}
 \end{figure*}

\textbf{State Features.}
\label{ssec:features}
The actor and critic receive different input features, corresponding to local
observations and global state, respectively. Both include features for allies
and enemies. \emph{Units} can be either allies or enemies, while \emph{agents} are
the decentralised controllers that command ally units.

The local observations for every agent are drawn only from a circular subset of
the map centred on the unit it controls and include for each unit within this 
field of view:
\texttt{distance}, \texttt{relative x}, \texttt{relative
y}, \texttt{unit type} and \texttt{shield}.\footnote{After firing, a unit's 
\texttt{cooldown} is reset, and it must drop
before firing again. Shields absorb damage until they break, after which units
start losing health. Dragoons and zealots have shields but marines do not.}
All features are normalised by their maximum values. We do not include any 
information about the units' current target.

The global state representation consists of similar features, but for
all units on the map regardless of fields of view. Absolute distance is not 
included, and $x$-$y$ locations are given relative to the centre of the map 
rather than to a particular agent. The global state also includes 
\texttt{health points} and \texttt{cooldown} for all agents. The representation 
fed
to the centralised $Q$-function critic is the concatenation of the global state 
representation with the 
local observation of the agent whose actions are being evaluated. Our 
centralised critic that estimates $V(s)$, 
and is therefore agent-agnostic, receives the global state concatenated with 
all agents' observations. The observations contain no new information but 
include the egocentric distances relative to that agent.

\textbf{Architecture \& Training.}
\label{ssec:architecture} 
The actor consists of 128-bit \emph{gated recurrent units} 
(GRUs)~\citep{cho2014properties} that use fully connected layers both to 
process the input and to produce the output values from the hidden state, 
$h^a_t$. The IAC critics use extra output heads appended to the last layer of 
the actor network. Action probabilities are produced from the final layer, 
$\vz$, via a bounded softmax distribution that lower-bounds the probability of 
any given action by $\epsilon / |U|$:  $P(u) = (1- \epsilon) \mysoftmax{\vz}_u 
+  \epsilon / |U|)$. We anneal $\epsilon$ linearly from $0.5$ to $0.02$ across 
$750$ training episodes. The centralised critic is a feedforward network with 
multiple ReLU layers combined with fully connected layers.  Hyperparameters 
were coarsely tuned on the 5m scenario and then used for all other 
maps. We found that the most sensitive parameter was TD($\lambda$), but settled 
on $\lambda=0.8$, which worked best for both COMA 
and our baselines. Our implementation uses 
TorchCraft~\citep{synnaeve2016torchcraft} and \mbox{Torch 7}~\citep{torch}. Pseudocode and
further details on the training procedure are in the 
supplementary material.

We experimented with critic architectures that are factored at the agent level and 
further exploit internal parameter sharing. However, we found that the 
bottleneck for scalability was not the centralisation of the critic, but rather 
the difficulty of multi-agent exploration.  Hence, we defer further investigation of factored COMA critics to future work.

\begin{table*}[ht]
	\begin{center}
		\resizebox{\textwidth}{!}{
			\begin{tabular}{l ccccccc ccc}
				\toprule
				& \multicolumn{7}{c}{Local Field of View (FoV)} & 
				\multicolumn{3}{c}{Full FoV, Central Control} \\
				\cmidrule(lr){2-8} \cmidrule(lr){9-11}
				
				\multirow{2}{*}{map}  & \multirow{2}{*}{heur.}    & 
				\multirow{2}{*}{IAC-$V$}  & \multirow{2}{*}{IAC-$Q$}  & 
				\multirow{2}{*}{cnt-$V$} & \multirow{2}{*}{cnt-$QV$}  & 
				\multicolumn{2}{c}{COMA}  & \multirow{2}{*}{heur.} & 
				\multirow{2}{*}{DQN} & \multirow{2}{*}{GMEZO}  \\
				&   &  &    &   &  & mean & best  & & & \\

				\midrule
				3m        & 35   & 47 (3)   & 56 (6)    & 83 (3)   & 83 (5)   & 
				\textbf{87} (3)  &  98  &  74  &   - & -          \\
				5m        & 66   & 63 (2)   & 58 (3)    & 67 (5)   & 71 (9)   & 
				\textbf{81} (5)   &  95  &  98  &  99 & 100 \\
				5w        & 70   & 18 (5)   & 57 (5)     & 65 (3)   & 76 (1)   
				& 
				\textbf{82} (3)   &  98  &  82  &    70 & 
				74\footnotemark[3] \\
				2d\_3z    & \textbf{ 63}   & 27 (9)   & 19 (21)  & 36 (6)  & 
				39 (5)   
				&47  (5)  &  65  &  68  &    61 & 90 \\
				\bottomrule
				
			\end{tabular}
		}
		



	\end{center}
	\caption{Mean win percentage averaged across final 1000 evaluation episodes 
		for the different maps, for all methods and the hand-coded 
		heuristic in 
		the decentralised setting with a limited field of view. The 
		highest mean performances are in bold, while values in 
		parentheses denote the 95\% confidence interval, for example 
		$87(3) = 87 \pm 3$.
		Also shown, maximum win percentages for COMA (decentralised), in 
		comparison to the heuristic and published results (evaluated in the 
		centralised setting).}
    \vspace{-0.75em}
	\label{tbl:test_in_domain}
	
\end{table*}

\textbf{Ablations.}
We perform ablation experiments to validate three key elements of COMA. First, 
we test the importance of centralising the critic by comparing 
against two IAC variants, IAC-$Q$ and IAC-$V$. These critics take the same 
decentralised input as the actor, and share parameters with the actor network 
up to the final layer. IAC-$Q$ then outputs $\vert U \vert$ $Q$-values, one for 
each action, while IAC-$V$ outputs a single state-value. Note that we still 
share parameters between agents, using the egocentric observations and ID's as 
part of the input to allow different behaviours to emerge. The cooperative 
reward function is still shared by all agents.

Second, we test the significance of learning $Q$ instead of $V$. The method 
\mbox{\emph{central-$V$}} still uses a central state for the critic, but learns 
$V(s)$, and uses the TD error to estimate the advantage for policy gradient 
updates.

Third, we test the utility of our counterfactual baseline. The method 
\mbox{\emph{central-$QV$}} learns both $Q$ and $V$ simultaneously and estimates 
the 
advantage as $Q-V$, replacing COMA's counterfactual baseline with $V$.
All methods use the same architecture and training scheme for the actors, and 
all critics are trained with TD($\lambda$).

\section{Results}
\label{sec:results}

Figure \ref{fig:fig_2} shows average win rates as a function of episode
for each method and each StarCraft scenario. For each method, we conducted 35
independent trials and froze learning every 100 training episodes to evaluate
the learned policies across 200 episodes per method, plotting the average across
episodes and trials. Also shown is one standard deviation in performance.

The results show that COMA is superior to the IAC baselines in all scenarios. Interestingly, the IAC methods also eventually learn reasonable policies in
5m, although they need substantially more episodes to do so. This may seem
counterintuitive since in the IAC methods, the actor and critic networks share
parameters in their early layers (see Section
\ref{ssec:architecture}), which could be expected to speed learning. However, these 
results suggest that the improved accuracy of policy evaluation made 
possible by conditioning on the global state outweighs the overhead of training 
a separate network.

Furthermore, COMA strictly dominates \mbox{central-$QV$}, both in training 
speed and in final performance across all settings. This is a strong indicator 
that our counterfactual baseline is crucial when using a central $Q$-critic to 
train decentralised policies.

Learning a state-value function has the obvious advantage of not conditioning 
on the joint action. Still, we find that COMA outperforms the 
\mbox{central-$V$} baseline in final performance. Furthermore, COMA typically achieves good 
policies faster, which is expected as COMA provides a shaped training signal. 
Training is also more stable  than \mbox{central-$V$}, which is a consequence of the COMA gradient 
tending to zero as the policy becomes greedy. Overall, COMA is the best 
performing and most consistent method.

\citet{usunier2016episodic} report the performance of their best agents trained with their 
state-of-the-art centralised controller labelled GMEZO (greedy-MDP with 
episodic zero-order optimisation), and for a centralised DQN controller, both 
given a full field of view and access to attack-move macro-actions. These 
results are compared in Table~\ref{tbl:test_in_domain} against the best agents 
trained with COMA for each map. Clearly, in most settings these agents achieve 
performance comparable to the best published win rates despite being 
restricted to decentralised policies and local fields of view.

\footnotetext[3]{5w 
	DQN and GMEZO benchmark performances are of a policy 
	trained on a larger 
	map and tested on 5w}

\section{Conclusions \& Future Work}
\label{sec:conclusion}

This paper presented COMA policy gradients, a method that uses a centralised 
critic in order to estimate a counterfactual advantage for decentralised 
policies in mutli-agent RL. COMA addresses the challenges of multi-agent credit 
assignment by using a counterfactual baseline that marginalises out a single 
agent's action, while keeping the other agents' actions fixed. Our results in a 
decentralised \emph{StarCraft unit micromanagement} benchmark show that COMA 
significantly improves final performance and training speed over other 
multi-agent actor-critic methods and remains competitive with state-of-the-art 
centralised controllers under best-performance reporting. Future work will 
extend COMA to tackle scenarios with large numbers of agents, where centralised 
critics are more difficult to train and exploration is harder to coordinate. We 
also aim to develop more  sample-efficient variants that are practical for 
real-world applications such as self-driving cars.

\section*{Acknowledgements} 
This project has received funding from the European Research Council (ERC) under the European Union's Horizon 2020 research and innovation programme (grant agreement number 637713).  It was also supported by the Oxford-Google DeepMind Graduate Scholarship, the 
UK EPSRC CDT in Autonomous Intelligent Machines and Systems, and a generous 
grant from Microsoft for their Azure cloud computing services.
We would like to thank Nando de Freitas, Yannis Assael, and Brendan Shillingford
for helpful comments and discussion. We also thank Gabriel Synnaeve, Zeming Lin,
and the rest of the TorchCraft team at FAIR for their work on the interface.

\section*{Errata}
An earlier version of this paper contained an error in the proof of Lemma \ref{lemma} because the critic depended on the state $s$ but not the joint history $\vtau$.  In this version, Equation \ref{eqn:advantage} and Figure \ref{fig:fig_1} have been updated to add this dependence. In addition, the proof of Lemma \ref{lemma} has been revised to show explicitly how existing policy gradient results apply to this modified setting. The proof also refers to the \citet{sutton1999policy} result instead of that of \citet{konda2000actor} as the latter requires that the Markov chain induced by the policy be irreducible, which does not hold for history-based state representations. Thanks to Frans Oliehoek and Chris Amato for pointing out these issues.  Thanks also to Frans Oliehoek and Andrea Baisero for feedback on the revised proof. See also \citet{lyu2021contrasting,lyu2022deeper,lyu2023centralized,lyu2024centralizedcriticsmultiagentreinforcement} for more details on this topic.

\bibliography{Forester-Farquhar.bbl}
\bibliographystyle{include/aaai}
\onecolumn
	
\appendix

\section{Proof of Lemma \ref{lemma}}
\label{app:proof}

The COMA gradient is given by
\begin{align}
g = \E_{\vpi} \left[ \sum_a \nabla_{\theta} \log
\pi^a(u^a | \tau^a)
A^a(s,\vtau, \vu) \right], \\[2em]
A^a(s,\vtau, \vu)
= Q(s, \vtau, \vu) - b(s, \vtau, \vu^{-a}),
\end{align}
where $\theta$ are the parameters of all actor policies, e.g., $\theta = 
\{\theta^1,\ldots,\theta^{|A|}\}$, and $b(s, \vtau, \vu^{-a})$ is the counterfactual 
baseline defined in equation \ref{eqn:advantage}.

First consider the expected contribution of the baseline:
\begin{align}
g_b &= - \sum_{s, \vtau} p(s, \vtau) \sum_{\vu} \bigg( \vpi(\vu | \vtau) \sum_a \nabla_\theta \log \pi^a(u^a | \tau^a)
b(s, \vtau, \vu^{-a}) \bigg), \\
 &= - \sum_{s, \vtau} p(s, \vtau) \sum_a \sum_{\vu^{-a}} \bigg( 
\vpi^{-a}(\vu^{-a}|\vtau^{-a}) \sum_{u^a} 
\pi^a(u^a|\tau^a)  \nabla_\theta \log \pi^a(u^a | \tau^a)
b(s, \vtau, \vu^{-a}) \bigg), \\
&= - \sum_{s, \vtau} p(s,\vtau) \sum_a \sum_{\vu^{-a}} \bigg( \vpi^{-a}(\vu^{-a}|\vtau^{-a}) 
\sum_{u^a} 
\nabla_\theta \pi^a(u^a | \tau^a)
b(s, \vtau, \vu^{-a}) \bigg), \\
&= - \sum_{s, \vtau} p(s, \vtau) \sum_a \sum_{\vu^{-a}} \bigg( \vpi^{-a}(\vu^{-a}|\vtau^{-a})
b(s, \vtau, \vu^{-a}) \; \nabla_\theta 1 \bigg), \\
&= 0,
\end{align}
where $\vtau^{-a}$ is the joint action-observation history of all agents except $a$, and $\vpi^{-a}$ is the joint policy of all agents except $a$.
Clearly, the per-agent baseline, although it may reduce variance, does not change 
the expected gradient, and therefore does not affect the convergence of COMA. 
The key feature of the counterfactual baseline which allows this property is the independence for agent $a$ on the action $u^a$.

The remainder of the expected policy gradient is given by:
\begin{align}
g &= \E_{\vpi} \left[ \sum_a \nabla_\theta \log
\pi^a(u^a | \tau^a)
Q(s,\vtau, \vu) \right] \\
\label{eq:remainder}
&= \E_{\vpi} \left[ \nabla_\theta \log \prod_a 
\pi^a(u^a | \tau^a)
Q(s,\vtau, \vu) \right].
\end{align}
By writing the joint policy as the product of the independent actors,
\begin{equation}
\label{eq:policy}
\vpi(\vu|\vtau) = \prod_a \pi^a(u^a | \tau^a),
\end{equation}
we can rewrite \eqref{eq:remainder} as:
\begin{align}
\label{eq:joint}
g = \E_{\vpi} \left[ \nabla_\theta \log \vpi(\vu|\vtau) 
Q(s,\vtau, \vu) \right].
\end{align}
Furthermore, we can reinterpret $\vpi$ as a single-agent policy in a corresponding MDP $M=\langle S_m,U_m,P_m,r_m,\gamma\rangle$ where, 
\begin{itemize}
    \item $S_m=S\times \vT$; 
    \item $U_m=\vU$;
    \item $P_m(s'_{m}|s_m) = P(s',\vtau'|s,\vtau) = P(s'|s)\mathds{1}_=(\vz',\vz)$, where $\vtau'$ is $\vtau$ extended with $\vz'$, $\vz=\left[O(s',1),\ldots,O(s',n)\right]$, and $\mathds{1}_=(\vz',\vz) = 1$ iff $\vz'=\vz$; and
    \item $r_m((s,\vtau),\vu) = r(s,\vu)$.
\end{itemize}
The gradient $g$ in the stochastic game $G$ corresponds to the standard policy gradient gradient in $M$:
\begin{align}
\label{eq:KT}
g_m = \E_{\pi_m} \left[ \nabla_\theta \log \pi_m(u_m|s_m) 
Q(s_m,u_m) \right],
\end{align}
where $\pi_m(u_m|s_m) = \pi_m(u_m|s,\vtau) = \vpi(\vu|\vtau)$, i.e., $\pi_m$ follows the policy in \eqref{eq:policy}, foregoing the option of depending on $s$. 
\citet{sutton1999policy} prove that an actor-critic 
following this gradient converges to a local maximum of the expected return, given several assumptions, including:
\begin{enumerate}[leftmargin=3em]
    \item the MDP has bounded rewards;
    \item the policy is differentiable;
    \item the critic is trained against unbiased targets, as in TD(1);
    \item the critic has converged to a local optimum before each policy gradient is estimated; and
	\item the critic uses a representation compatible with the policy.
\end{enumerate}
The policy parameterisation (i.e., the single-agent joint-action learner is decomposed into 
independent actors) is immaterial to convergence, as long as it remains 
differentiable. 

\section{Training Details and Hyperparameters}

Training is performed in batch mode, with a batch size of 30. Due to parameter 
sharing,  all agents can be processed in parallel, with each agent for each 
episode and time step occupying one batch entry. The training cycle progresses 
in three steps (completion of all three steps constitutes as one episode in our 
graphs):
1) \emph{collect data}: collect $\frac{30}{n}$ episodes;
2) \emph{train critic}: for each time step, apply a gradient step to the 
feed-forward critic, starting at the end of the episode; and
3) \emph{train actor}: fully unroll the recurrent part of the actor, aggregate 
gradients in the backward pass across all time steps, and apply a gradient 
update. 

We use a target network for the critic, which updates every $150$ training 
steps for the feed-forward centralised critics and every $50$ steps for the 
recurrent IAC critics. The feed-forward critic receives more learning steps, 
since it performs a parameter update for each timestep. Both the actor and the 
critic networks are trained using RMS-prop with learning rate $0.0005$ and 
alpha $0.99$, without weight decay.  We set gamma to $0.99$ for all maps.

Although tuning the skip-frame in StarCraft can improve absolute 
performance \citep{peng2017multiagent}, we use a default value of 7, since the 
main focus is a relative evaluation between COMA and the baselines.

\section{Algorithm}

\begin{algorithm}
	\caption{Counterfactual Multi-Agent (COMA) Policy Gradients}
	\label{alg:dic}
	\hskip -2em
	\begin{algorithmic} 
		\State Initialise $\theta^c_1$, $\hat{\theta}^c_{1}, \theta^{\pi} $
		\For{each training episode $e$}
		\State Empty buffer
		\For{$e_c=1$ {\bfseries to} $\frac{\text{BatchSize}}{n}$}
		\State $s_1 = $ initial state, $t = 0$, $h^a_0 = \mathbf{0}$ for each 
		agent $a$
		\While{$s_t \ne$ terminal {\bfseries and} $t < T$}
		\State $t = t + 1$
		\For{each agent $a$}
		\State $h^a_t = \text{Actor}\left(o^a_t,h^a_{t-1}, u^a_{t-1},a,u; 
		\theta_{i} \right)$ 
		\State Sample $u^a_t$ from $\pi(h^a_{t}, \epsilon(e)) $ 
		\EndFor
		\State Get reward $r_t$ and next state $s_{t+1}$    \EndWhile
		\State  Add episode to buffer
		\EndFor
		\State Collate episodes in buffer into single batch
		\For{$t=1$ {\bfseries to} $T$}  // from now processing all agents in 
		parallel via single batch
		\State Batch unroll RNN using states, actions and rewards
		\State Calculate TD($\lambda$) targets $y^a_t$ using  
		$\hat{\theta}^c_{i}$
		\EndFor
		\For{$t=T$ {\bfseries down to} $1$} 
		\State $ \Delta Q^a_t  = y^a_t - Q\left( s^a_j, \vu \right) $
		\State  $\Delta \theta^c = \nabla_{
			\theta^c} (\Delta Q^a_t)^2 $ // calculate critic gradient
		\State  $\theta^{c}_{i+1} = \theta^{c}_i - \alpha \Delta \theta^{c}$ // 
		update critic weights
		\State Every C steps  reset $\hat{\theta}^c_{i} = \theta^c_{i}$
		
		\EndFor
		\For{$t=T$ {\bfseries down to} $1$}
		\State $A^a( s^a_t,\vu) =Q( s^a_t, \vu) - \sum_u  Q( s^a_t, u, 
		\vu^{-a}) \pi({u|h^a_t} )$ // calculate COMA
		\State  $\Delta \theta^{\pi} =  \Delta \theta^{\pi}  + \nabla_{ 
		\theta^{\pi}} \log  \pi({u|h^a_t} ) A^a( s^a_t,\vu) $ // 
		accumulate actor gradients
		\EndFor
		\State	$\theta^{\pi}_{i+1} = \theta^{\pi}_i + \alpha \Delta 
		\theta^{\pi}$ // update actor weights
		\EndFor
	\end{algorithmic}
\end{algorithm}

\end{document}